\newtheorem{remark}{Remark}
\newcommand{\blank}[1]{}
\newcommand{\ie}{{i.e.}}
\newcommand{\inred}[1]{\textcolor{red}{#1}}
\newcommand{\inblue}[1]{\textcolor{blue}{#1}}
\newcommand*{\Scale}[2][4]{\scalebox{#1}{$#2$}}
\newtheorem{theorem}{Theorem}
\newtheorem{lemma}{Lemma}
\newtheorem{problem}{Problem}
\newtheorem{assumption}{Assumption}
\newtheorem{proof}{Proof}
\newtheorem{definition}{Definition}
\title{Stable Prediction via Leveraging Seed Variable}
\author{%
  Kun Kuang \thanks{kunkuang@zju.edu.cn}\\
  Zhejiang University\\
\And
Bo Li  \\
  Tsinghua University \\
  \And
Peng Cui \\
  Tsinghua University \\
  \And
Yue Liu \\
  Peking University \\
  \And
Jianrong Tao \\
  Netease \\
      \And
Yueting Zhuang \\
   Zhejiang University \\
    \And
Fei Wu \\
   Zhejiang University \\
}
\begin{document}

\maketitle

\begin{abstract}
In this paper, we focus on the problem of stable prediction across unknown test data, where the test distribution is agnostic and might be totally different from the training one. In such a case, previous machine learning methods might exploit subtly spurious correlations in training data induced by non-causal variables for prediction. Those spurious correlations are changeable across data, leading to instability of prediction across data. By assuming the relationships between causal variables and response variable are invariant across data, to address this problem, we propose a conditional independence test based algorithm to separate those causal variables with a seed variable as priori, and adopt them for stable prediction. By assuming the independence between causal and non-causal variables, we show, both theoretically and with empirical experiments, that our algorithm can precisely separate causal and non-causal variables for stable prediction across test data. Extensive experiments on both synthetic and real-world datasets demonstrate that our algorithm outperforms state-of-the-art methods for stable prediction.
\end{abstract}

\section{Introduction}


Many machine learning algorithms have been shown to be very successful for prediction when the test
data have the same distribution as the training data. In real scenarios, however,
we cannot guarantee the unknown test data will have the
same distribution as the training data. For example, different
geographies, schools, or hospitals may draw from different
demographics, and the correlation structure among demographics
may also vary (e.g., one ethnic group may be more
or less disadvantaged in different geographies).
The model may exploit subtly genuine statistical relationships among predictors present in the training data to improve prediction, resulting in the instability of prediction across test data that out of training distribution.
Hence, how to learn a model for stable prediction across unknown test data is of paramount importance for both
academic research and practical applications.

To address the stable/invariant prediction problem, recently, many algorithms have been proposed, including domain generalization \cite{muandet2013domain}, causal transfer learning \cite{rojas2018invariant} and invariant causal prediction \cite{peters2016causal}. The motivation of these methods is to explore the invariant or stable structure between predictors and the response variable across multiple training data for stable prediction. But they cannot handle the test data whose distribution are out of all training environments.
Kuang et al. \cite{kuang2018stable,kuang2020stable} proposed to recover causation between predictors and response variable by global sample weighting, and separate causal variables for stable prediction. However, they either assume all predictors are binary or analyze based on linear model, which are impractical in real scenarios.

In the stable prediction problem \cite{kuang2018stable}, all predictors $\mathbf{X}$ can be separated into two categories, including causal variables $\mathbf{C}$ and non-causal variables $\mathbf{N}$, by whether it has causal effect on the response variable $Y$ or not, that is $\mathbf{X} = \{\mathbf{C},\mathbf{N}\}$.
For example, ears, noses, and whiskers are causal variables of cats to identify whether an image contains a cat or not, while the grass or other backgrounds are non-causal variables to recognize the cat.
Then, the generation of the response variable $Y$ can be denoted as $Y = f(\mathbf{X}) + \epsilon =  f(\mathbf{C}) + \epsilon$, where non-causal variables $\mathbf{N}$ should be independent with the response variable $Y$ conditional on the full sets of causal variables $\mathbf{C}$. But they might be spuriously correlated with either causal variables, response variable or both because of sample selection bias in data. For example, the variable ``grass'' would be spuriously correlated with label ``cat'' and become a powerful predictor if we select many images with ``cat on the grass'' as training data. Those spurious correlations between non-causal variables and the response variable are varied and unstable across datasets with different distributions, leading to unstable prediction across unknown test data. Hence, to address the stable prediction problem, one possible solution is to separate the causal and non-causal variables, and only adopt causal variables for model training and prediction. However, in practice, the analyst always have no prior knowledge on which variables are casual variables and which are non-causal variables.

Variable/Feature selection plays a very important role in machine learning filed. Traditional correlation based feature selection methods utilized either the correlation criteria \cite{nie2010efficient} or mutual information criteria \cite{peng2005feature} without distinguishing the spurious correlation, leading to unstable prediction across test data that out of training distribution.
In the literature of causality, causal discovery and causal estimation techniques can be adopted for causal variables selection.
PC \cite{spirtes2000causation}, FCI \cite{spirtes2000causation} and CPC \cite{ramsey2012adjacency} are three of the most prominent causal discovery methods based on conditional independence (CI) test, but their complexity grow exponentially with the number of variables. 
Moreover, PC method need assume causal sufficiency, i.e., the assumption that all common causes of observed predictors are observed.
\cite{athey2018approximate,kuang2017estimating} can approximately identify causal variables via estimating the causal effect of each variable, but they focused on binary predictors and required that all causal variables are observed.


\begin{wrapfigure}{t}{5.0cm}
\centering
  \includegraphics[width=2.1in]{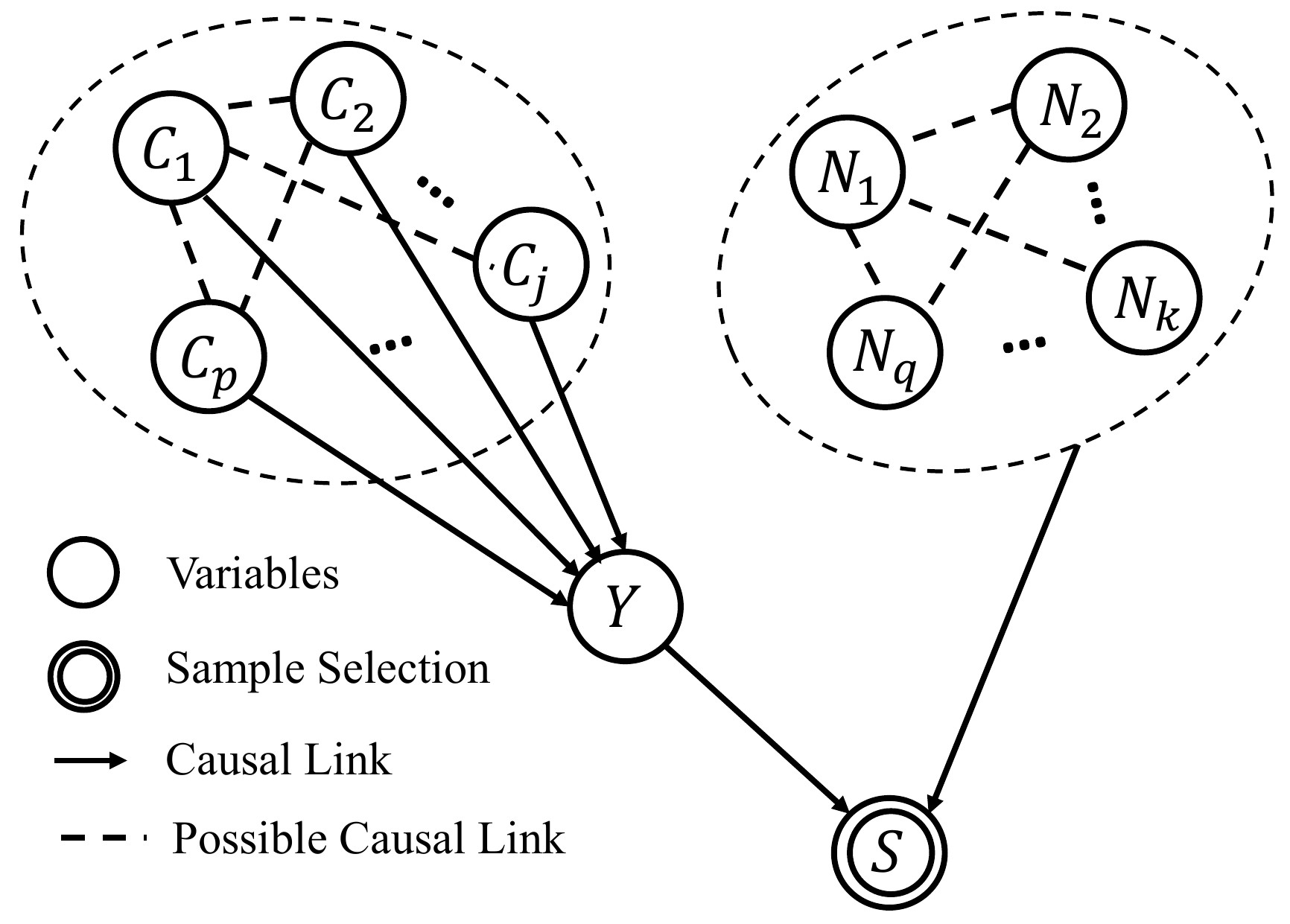}
\caption{SCM in our problem. Each causal variable $\mathbf{C}_i$ has a direct causal link to $Y$, but non-causal variable $\mathbf{N}_{j}$ does not. Under the sample selection \cite{bareinboim2012controlling} (indexed by variable $S$), some non-causal variables might be correlated with either response variable, causal variables, or both\protect\footnotemark[2].}
\label{fig:SCM_figure1}
\vspace{-0.1in}
\end{wrapfigure}
\footnotetext[2]{The distribution under sample selection is always conditioned on $S$.}

With considering the practical scenarios that causal sufficiency assumption is not met and parts of causal variables are unobserved or unmeasured,
in this paper, we propose a novel CI test based causal variable separation method for stable prediction.
By assuming that the set of causal variables $\mathbf{C}$ and non-causal variables $\mathbf{N}$ are independent, Fig. \ref{fig:SCM_figure1} illustrates the structural causal model (SCM) in our problem. Then, we provides a series of theorems to prove that one can separate the causal variables with a single CI test per variable. Specifically, as shown in Fig. \ref{fig:SCM_figure1}, if we know a seed variable $\mathbf{C}_0$ is one of the causal variables, then each causal variable $\mathbf{C}_{\cdot,i}$ should satisfy that $\mathbf{C}_{\cdot,i} \not\!\perp\!\!\!\perp \mathbf{C}_0 \mid Y$, and each non-causal variable $\mathbf{N}_{\cdot,j}$ should satisfy that $\mathbf{N}_{\cdot,j} \perp\!\!\!\perp \mathbf{C}_0 \mid Y$.
With those theoretical analyses, we present a CI test based causal variable separation method for stable prediction. At a first step, we apply our causal variable separation method on synthetic data, which leads to high precision on causal variable separation, and the precisely separated causal variables bring stability for prediction across unknown test data. In real-world applications, we also demonstrate that our algorithm outperforms baseline algorithms in both causal variable separation task and stable prediction task.

Comparing with previous CI based causal discovery methods \cite{spirtes2000causation,ramsey2012adjacency,buhlmann2010variable,yu2019causality}, our method do not rely on the assumption of causal sufficiency and remain unaffected even some causal variables are unobserved. Moreover, our algorithm separate the causal variables with a single CI test per variable, scaling algorithmic complexity from exponential to linear with the number of variables.
Comparing with sample based work on stable prediction \cite{kuang2018stable,kuang2020stable}, our method can be applied for continuous settings and separate the causal variables without assumptions on regression model.
Our work is similar with a recent paper \cite{mastakouri2019selecting}, which also adopt CI for causal variable selection. But the tailored problems are totaly different in the following ways: (i) \cite{mastakouri2019selecting} focused on detecting direct and indirect causes of a response variable under i.i.d settings, while our algorithm is designed for separating causal and non-causal variables under the biased settings with sample selection bias; (ii) \cite{mastakouri2019selecting} is tailored for the problem in which a cause variable of each candidate causal variable is known, while our algorithm assume the independence between causal and non-causal variables, and a seed variable as priori. Moreover, we applied our method to address agnostic distribution shift issue between training and unknown test data for stable prediction.

\section{Stable Prediction Problem}

Let $\mathcal{X}$, $\mathcal{Y}$ denote the space of observed predictors and response variable, respectively.
We define an \textbf{environment} $e\in \mathcal{E}$ to be a joint distribution $P_{\mathbf{X}Y}$ on $\mathcal{X} \times \mathcal{Y}$.
In practice, the joint distribution can vary across environments: $P^{e}_{XY} \neq P^{e'}_{XY}$ for $e,e'\in\mathcal{E}$.

In this paper, we consider a setting where a researcher has a single data set (data from one environment), and wishes to train a model that can then be applied to other environments. This type of problem might arise when a firm creates an algorithm that is then provided to other organizations to apply, for example, medical researchers might train a model and incorporate it in a software product that is used by a range of hospitals; academics might build a prediction model that is applied by governments in different locations.  The researcher may not have access to the end user's data for confidentiality reasons. The problem can be formalized as a stable prediction problem \cite{kuang2018stable} as follows:

\begin{problem}
(Stable Prediction). \textbf{Given} one training environment $e\in \mathcal{E}$ with dataset $\mathbf{D}^{e}=\{\mathbf{X}^{e},Y^{e}\}$, the task is to \textbf{learn} a predictive model that can \textbf{stably} predict across unknown test environments $\mathcal{E}$.
\end{problem}
In this problem, let $\mathbf{X} = \{\mathbf{C}, \mathbf{N}\}$, we define $\mathbf{C}$ as causal variables, and $\mathbf{N}$ as non-causal variables with the following assumption \cite{kuang2018stable}:
\begin{assumption}
\label{asmp:stable}
There exists a stable probability function P(y|c) such that for all environment $e\in \mathcal{E}$, $P(Y^e=y|\mathbf{C}^e=c, \mathbf{N}^e = n) = P(Y^e=y|\mathbf{C}^e=c) = P(y|c)$.
\end{assumption}

Thus, one can address the stable prediction problem by separating causal variables $\mathbf{C}$ and learning the stable function $P(y|c)$.
But, in practice, we have no prior knowledge on which variables are causal and which are non-causal.
In this work, we focus on stable prediction via separating causal variables.
\begin{assumption}
\label{asmp:independent}
Causal variables $\mathbf{C}$ and non-causal variables $\mathbf{N}$ are independent. Formally, $\mathbf{C}\perp\!\!\!\perp \mathbf{N}$.
\end{assumption}

Assumption \ref{asmp:stable} and \ref{asmp:independent} illuminate that the non-causal variable is independent with response variable during the data generation processing (\ie, $Y = f(\mathbf{X}) + \epsilon = f(\mathbf{C})+\epsilon$), but it might be spuriously correlated with either response variable, causal variables, or both since sample selection bias problem as shown in Fig. \ref{fig:SCM_figure1}. These spurious correlations might vary across environments. Hence, to make a stable prediction, one should guarantee the prediction only depending on the causal variables.


\section{Methods}

\subsection{Background on Causal Graph}
Firstly, we revisit key concepts and theorems related to $d$-separation and CI in causal graph.

Let $G = \{\mathbf{V}, E\}$ represents a causal directed acyclic graph (DAG) with nodes $\mathbf{V}$ and edges $E$, where a node denotes a variable and an edge represents the direct dependence or causal direction between two variables. In a DAG, $\mathbf{V}_i \rightarrow \mathbf{V}_j$ refers to that $\mathbf{V}_i$ is a cause of $\mathbf{V}_j$ and $\mathbf{V}_j$ is an effect of $\mathbf{V}_i$.

\begin{definition}[$d$-separation \cite{pearl2009causality}]\label{def:D_separation}
In a DAG $G$, a path $\pi$ is said to be $d$-separated by a set of nodes $\mathbf{Z}$ if and only if (i) $\pi$ contains a chain $\mathbf{V}_i \rightarrow \mathbf{V}_k \rightarrow \mathbf{V}_j$ or a fork $\mathbf{V}_i \leftarrow \mathbf{V}_k \rightarrow \mathbf{V}_j$ such that the middle node $\mathbf{V}_k$ is in $Z$, or (ii) $\pi$ contains a collider $\mathbf{V}_i \rightarrow \mathbf{V}_k \leftarrow \mathbf{V}_j$ such that the middle node $\mathbf{V}_k$ is not in $Z$ and such that no descendant of $\mathbf{V}_k$ is in $Z$.
\end{definition}

\begin{definition}[Conditional Independence] Given two distinct variables $\mathbf{V}_i,\mathbf{V}_j \in \mathbf{V}$ are said to be conditionally independent given a subset of variables $Z \subseteq \mathbf{V} \setminus \{\mathbf{V}_i, \mathbf{V}_j\}$ (i.e. $\mathbf{V}_i \perp\!\!\!\perp \mathbf{V}_j | \mathbf{Z})$, if and only if $P(\mathbf{V}_i,\mathbf{V}_j|Z) = P(\mathbf{V}_i|Z)P(\mathbf{V}_j|Z)$. Otherwise, $\mathbf{V}_i$ and $\mathbf{V}_j$ are conditionally dependent given $\mathbf{Z}$ (i.e. $\mathbf{V}_i \not\!\perp\!\!\!\perp\mathbf{V}_j | \mathbf{Z})$).
\end{definition}

The connection between $d$-separation and CI is established through the following lemma:
\begin{lemma}[Probabilistic Implications of $d$-Separation \cite{geiger1990identifying,pearl2009causality}] \label{theo:connection_d_ci}
If variables $\mathbf{V}_i$ and $\mathbf{V}_j$ are d-separated by $\mathbf{Z}$ in a DAG $G$, then $\mathbf{V}_i$ is independent of $\mathbf{V}_j$ conditional on $\mathbf{Z}$ in every distribution compatible with the DAG $G$. Conversely, if $\mathbf{V}_i$ and $\mathbf{V}_j$ are not d-separated by $\mathbf{Z}$ in a DAG $G$, then $\mathbf{V}_i$ and $\mathbf{V}_j$ are dependent conditional on $\mathbf{Z}$ in at least one distribution compatible with $G$.
\end{lemma}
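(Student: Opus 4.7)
The plan is to prove both halves of the lemma by leveraging the Markov factorization $P(\mathbf{V}) = \prod_k P(V_k \mid \text{pa}(V_k))$ that characterizes every distribution compatible with the DAG $G$.

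For the soundness direction (d-separation implies CI), I would follow the moralization route. First, restrict attention to the ancestral subgraph $G'$ induced by $\{\mathbf{V}_i, \mathbf{V}_j\} \cup \mathbf{Z}$; this is justified because marginalizing over non-ancestors preserves the factorization on the smaller set. Second, moralize $G'$ by adding an undirected edge between every pair of nodes sharing a child in $G'$ and then dropping arrow orientations. Third, establish the classical correspondence that $\mathbf{V}_i$ is d-separated from $\mathbf{V}_j$ by $\mathbf{Z}$ in $G$ if and only if $\mathbf{V}_i$ and $\mathbf{V}_j$ are vertex-separated by $\mathbf{Z}$ in the moralized ancestral graph. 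Finally, invoke the fact that in an undirected graphical model, vertex separation implies conditional independence via the induced clique factorization. Composing these steps yields $\mathbf{V}_i \perp\!\!\!\perp \mathbf{V}_j \mid \mathbf{Z}$ in every compatible distribution.

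For the completeness direction (non-d-separation implies dependence in some compatible distribution), the approach is constructive. Pick any open path $\pi$ between $\mathbf{V}_i$ and $\mathbf{V}_j$ witnessing non-d-separation, and instantiate a linear Gaussian SEM $V_k = \sum_{p \in \text{pa}(V_k)} \alpha_{kp} V_p + \epsilon_k$ with independent standard Gaussian noises and generic nonzero coefficients along $\pi$. A direct computation expresses the conditional covariance $\operatorname{Cov}(\mathbf{V}_i, \mathbf{V}_j \mid \mathbf{Z})$ as a polynomial in $\{\alpha_{kp}\}$ that is not identically zero, so for Lebesgue-almost every choice of parameters the pair remains conditionally dependent, furnishing the required distribution.

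The main obstacle is the correctness of the d-separation-to-moral-separation reduction, since conditioning interacts delicately with the collider rule: conditioning on a collider (or any descendant of one) opens a previously blocked path, and this is precisely the phenomenon that the "marry the parents" step of moralization on the ancestral subgraph is designed to encode. Making this correspondence rigorous requires careful case analysis over chain, fork, and collider segments along arbitrary paths, which is the subtlest part of the argument. The completeness side, by contrast, is largely mechanical modulo a standard genericity argument ruling out accidental polynomial cancellations along $\pi$.
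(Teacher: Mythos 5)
The paper offers no proof for you to match: Lemma~\ref{theo:connection_d_ci} is imported verbatim from the literature (hence the citations to Geiger et al.\ and Pearl) and is used as a black box in the proof of Theorem~\ref{theo:causal_features_selection}. Judged on its own merits, your sketch is correct and in fact reconstructs exactly the two classical arguments behind those citations. The soundness half is the Lauritzen-style moralization proof: pass to the ancestral closure $\mathrm{An}(\{\mathbf{V}_i,\mathbf{V}_j\}\cup\mathbf{Z})$ (legitimate because childless nodes outside this set can be integrated out one at a time, preserving the Markov factorization on what remains), moralize, use the equivalence of $d$-separation in $G$ with vertex separation in the moral ancestral graph, and close with the easy direction of the global Markov property for distributions factorizing over an undirected graph. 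The completeness half is precisely Geiger and Pearl's argument: in a linear Gaussian SEM compatible with $G$, the partial covariance $\Sigma_{ij}-\Sigma_{i\mathbf{Z}}\Sigma_{\mathbf{Z}\mathbf{Z}}^{-1}\Sigma_{\mathbf{Z}j}$ is a rational function of the edge coefficients that is not identically zero, hence nonzero for Lebesgue-almost-every parameter choice. Two details you flag but should make explicit if you write this out: (i) to certify non-vanishing, evaluate at the parameter point where the nonzero coefficients are exactly those on the witnessing open path $\pi$ \emph{together with} a directed path from each collider on $\pi$ into $\mathbf{Z}$ whenever that collider lies outside $\mathbf{Z}$ --- your phrase ``coefficients along $\pi$'' must be read to include these activation edges, or the path remains blocked and the evaluation gives zero; (ii) the final step silently uses that for Gaussians a nonzero partial covariance is equivalent to conditional dependence, which is what licenses the conclusion. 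With those two points spelled out, your proposal is a complete and faithful proof of the lemma the paper only cites.
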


\subsection{Causal Variables Separation}
Based on lemma \ref{theo:connection_d_ci}, in this paper, we propose an elaborative but effective causal variables separation algorithm by combining the mechanisms of $d$-separation and causality with the following assumption.

\begin{assumption}
\label{asmp:one_CF}
We have prior knowledge on one causal variable. Formally, we know $\mathbf{C}_0 \in \mathbf{C}$.
\end{assumption}

Under assumption \ref{asmp:one_CF}, we have the following theorem to support for precisely separating the set of causal and non-causal variables. Then, the set of causal variables can be applied for stable prediction.

\begin{figure}[t]
\centering
\vspace{-0.1in}
\subfloat[Path between $\mathbf{C}_0$ and $\mathbf{N}_i$ \label{fig:path_cn}]{
  \includegraphics[width=2.0in]{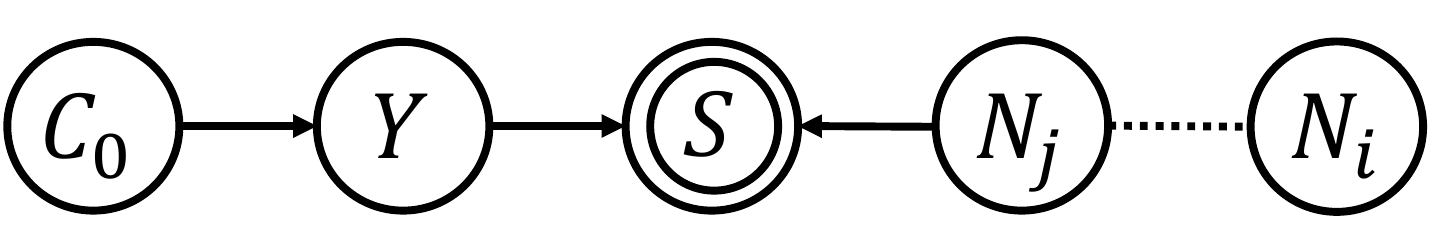}
}
\ \ \ \
\subfloat[Path between $\mathbf{C}_0$ and $\mathbf{C}_i$ \label{fig:path_cc}]{
  \includegraphics[width=1.3in]{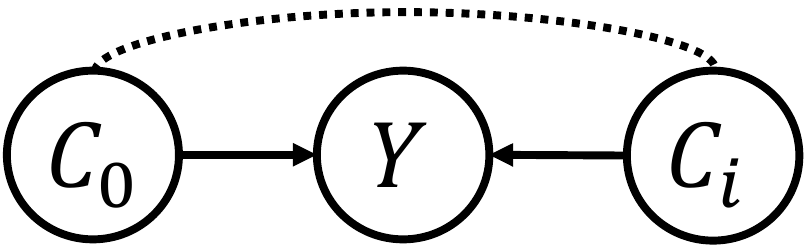}
}
\vspace{-0.2in}
\caption{Causal paths between a known causal variable $\mathbf{C}_0$ and other variables, including $\mathbf{C}_i$ and $\mathbf{N}_i$. The dash line between two variables refers to the causal link/path between them is unknown.}
\label{fig:path_cc_cn}
\vspace{-0.1in}
\end{figure}

\begin{theorem}\label{theo:causal_features_selection}
Given a causal variable $\mathbf{C}_0$, observed variables $\mathbf{X}$ and response variable $Y$, and assuming \ref{asmp:stable}\&\ref{asmp:independent}\&\ref{asmp:one_CF}, if $\mathbf{X}_{i} \not\!\perp\!\!\!\perp \mathbf{C}_0 \mid Y$, then $\mathbf{X}_{i}$ belongs to the set of causal variables, otherwise, it belongs to the set of non-causal variables.
\end{theorem}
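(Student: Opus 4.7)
The plan is to split the argument into two directions matching the two cases the theorem identifies, working throughout at the level of the structural causal model sketched in Figure \ref{fig:SCM_figure1}, and then invoking Lemma \ref{theo:connection_d_ci} to convert graphical $d$-separation (or its failure) into the corresponding probabilistic (in)dependence statement.

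First I would handle the causal direction: if $\mathbf{X}_i \in \mathbf{C}$, then by the defining property of the SCM both $\mathbf{C}_0$ and $\mathbf{X}_i$ have direct edges into $Y$. This produces the v-structure $\mathbf{C}_0 \rightarrow Y \leftarrow \mathbf{X}_i$ in which $Y$ is a collider. By Definition \ref{def:D_separation}, conditioning on a collider opens the path, so $\mathbf{C}_0$ and $\mathbf{X}_i$ are not $d$-separated by $\{Y\}$. Applying the second half of Lemma \ref{theo:connection_d_ci} then yields $\mathbf{C}_0 \not\!\perp\!\!\!\perp \mathbf{X}_i \mid Y$, as required. This direction is essentially one bookkeeping step, and pictorially it is just Figure \ref{fig:path_cc}.

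The non-causal direction, illustrated by Figure \ref{fig:path_cn}, is where the real work lies. If $\mathbf{X}_i \in \mathbf{N}$, I must show that every path between $\mathbf{C}_0$ and $\mathbf{X}_i$ is $d$-separated by $\{Y\}$. The argument has two ingredients. First, Assumption \ref{asmp:independent} ($\mathbf{C} \perp\!\!\!\perp \mathbf{N}$) forbids any chain or fork that directly connects $\mathbf{C}_0 \in \mathbf{C}$ to $\mathbf{X}_i \in \mathbf{N}$ in the unconditioned graph, so any surviving connecting path must enter $Y$. Second, Assumption \ref{asmp:stable} says $Y$ is generated only from $\mathbf{C}$, hence no non-causal variable is a parent or descendant of $Y$; in particular, along any path that runs from $\mathbf{C}_0$ through $Y$ toward $\mathbf{X}_i$, the node $Y$ cannot appear as a collider and must instead be a chain or fork middle node. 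Conditioning on $Y$ therefore blocks every such path, so $\mathbf{C}_0$ and $\mathbf{X}_i$ are $d$-separated by $\{Y\}$, and the first half of Lemma \ref{theo:connection_d_ci} yields $\mathbf{C}_0 \perp\!\!\!\perp \mathbf{X}_i \mid Y$.

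The step I expect to be the main obstacle is precisely this non-causal case: one has to rule out that some alternative route, such as a path through a hidden common ancestor, through the selection node $S$, or through a different causal variable $\mathbf{C}_j$, reintroduces an active connection once $Y$ is in the conditioning set. Assumption \ref{asmp:independent} carries most of the weight by excluding any $\mathbf{C}$--$\mathbf{N}$ crossing in the DAG away from $Y$, while Assumption \ref{asmp:stable} guarantees $Y$ never acts as a collider on a path reaching a non-causal variable. The cleanest write-up will therefore enumerate the candidate path shapes of Figures \ref{fig:path_cn} and \ref{fig:path_cc} and dispatch each using these two assumptions together with Definition \ref{def:D_separation}.
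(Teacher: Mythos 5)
Your proposal takes essentially the same route as the paper's own proof: the same two-case split, the same graphical reading of Assumptions \ref{asmp:stable} and \ref{asmp:independent} to build the DAG of Fig.~\ref{fig:SCM_figure1}, the collider-at-$Y$ argument for the causal case (Fig.~\ref{fig:path_cc}) and blocking-at-$Y$ for the non-causal case (Fig.~\ref{fig:path_cn}), followed by Lemma~\ref{theo:connection_d_ci} to pass between $d$-separation and (in)dependence. If anything, your explicit enumeration of why $Y$ cannot act as a collider on any path reaching a non-causal variable, and your handling of routes through the selection node $S$, is more detailed than the paper's proof, which mostly points at the figures; note that both arguments share the same implicit reliance on faithfulness when converting ``not $d$-separated'' (dependence in \emph{at least one} compatible distribution, per Lemma~\ref{theo:connection_d_ci}) into conditional dependence in the actual distribution.
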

\begin{proof}
Assumption \ref{asmp:stable} implies that non-causal variables $\mathbf{N}$ are not direct causes of response $Y$, but causal variables $\mathbf{C}$ are the direct causes. Hence, in our causal DAG, there exists a direct edge from each causal variable $\mathbf{C}_i$ to response $Y$, but $\mathbf{N}$ have no any edges that directly point to $Y$.
Assumption \ref{asmp:independent} guarantees no causal link between any causal and non-causal variables, but the causal structure among causal variables (or non-causal variables) might be very complex and unknown.
With considering the sample selection bias is generated based on the response $Y$ and part of non-causal variables $\mathbf{N}$, the causal DAG in our problem is shown in Fig. \ref{fig:SCM_figure1}.

From Fig. \ref{fig:SCM_figure1}, the path between the seed causal variable $\mathbf{C}_0$ and any non-causal variable $\mathbf{N}_i$ can be represented as Fig. \ref{fig:path_cn}, where the causal links between $\mathbf{N}_i$ and $\mathbf{N}_j$ are unknown, could be very complex or could $\mathbf{N}_j$ is exactly $\mathbf{N}_i$ if sample selection is based on $\mathbf{N}_i$ and $Y$. With the definition of $d$-separation, we have that $\mathbf{C}_0$ and $\mathbf{N}_i$ are $d$-separated by variable $Y$. Hence, $\mathbf{N}_i \perp\!\!\!\perp \mathbf{C}_0 \mid Y$ for any $\mathbf{N}_i \in \mathbf{N}$ guaranteed by the lemma \ref{theo:connection_d_ci}.

On the other hand, the path between the seed causal variable $\mathbf{C}_0$ and any other causal variable $\mathbf{C}_i$ can be represented as Fig. \ref{fig:path_cc}, where the causal links between $\mathbf{C}_0$ and $\mathbf{C}_i$ are unknown. Similarity, with the definition of $d$-separation, we know that the response variable $Y$ is a collider and cannot $d$-separate $\mathbf{C}_0$ and $\mathbf{C}_i$. Therefore, with the lemma \ref{theo:connection_d_ci}, we have $\mathbf{C}_i \not\!\perp\!\!\!\perp \mathbf{C}_0 \mid Y$ for any $\mathbf{C}_i \in \mathbf{C}$.

\begin{algorithm}[t]
\caption{{top-$k$ Causal Variables Separation/Selection}}
\label{alg:CFS}
\begin{algorithmic}[1]
\Require
$\mathbf{X} \in \mathbb{R}^{n\times p}$, $Y \in \mathbb{R}^{n}$, $\mathbf{C}_{0}$ and parameter $k$
\Ensure
top-$k$ casual variables
\For{each variable $\mathbf{X}_{i} \in \mathbf{X}$}
\State Calculate p-value of CI test: $pv_i=\text{CI-test}(\mathbf{X}_{i}\perp\!\!\!\perp \mathbf{C}_{0} \mid Y)$
\EndFor
\State $\mathbf{X}_{ranking} = Ranking(\mathbf{X}, \mathbf{pv})$ \Comment{Ranking $\mathbf{X}_{i} \in \mathbf{X}$ by their p-value $pv_i$ in ascending order}\\
\Return top-$k$ ranked variables in $\mathbf{X}_{ranking}$
\end{algorithmic}
\end{algorithm}

Overall, we can separate causal and non-causal variables by a single CI test per variable, and $\mathbf{X}_{i}$ belongs to the set of causal variables if $\mathbf{X}_{i} \not\!\perp\!\!\!\perp \mathbf{C}_0 \mid Y$, otherwise, $\mathbf{X}_{i}$ is non-causal variable.
\end{proof}

Based on theorem \ref{theo:causal_features_selection}, we propose a causal variable separation algorithm via one single CI test per variable. The details of our algorithm are summarized in Algorithm \ref{alg:CFS}.
With the separated top-$k$ causal variables, we can learn a predictive model for stable prediction.

\begin{remark}
From the proof of theorem \ref{theo:causal_features_selection}, we know that to identify whether a variable is causal or not, our algorithm only need a single CI test of that variable and a known causal variable conditional on the response variable, with no need to know the other causal variables or common causes of observed variables. Then, we conclude that (i) our algorithm is not affected by the unobserved causal variables, but missing some causal variables would decrease the performance of predictive model on prediction; and (ii) the causal sufficiency assumption is not necessary for our algorithm, but we need to assume the independence between causal and non-causal variables.
\end{remark}

\textbf{Complexity Analysis.}
Note that our algorithm requires only a single CI test per variable. Therefore, it speeds up the causal variables separation as it scales linearly with the number of variables, hence its complexity is $O(cp)$, where $p$ is the dimension of observed variables and $c$ is a constant denoting the complexity of a single CI test.

\textbf{Discussions on assumptions.}
Assumption \ref{asmp:stable} refers to that the underlying predictive mechanism is invariant across environments, which is the basic assumption for causal variables identification and stable/invariant prediction \cite{peters2016causal,kuang2018stable}.
In assumption \ref{asmp:independent}, we assume the independence between causal variables and non-causal variables, which is critical to our method. In practice, however, one might adopt disentangled representation \cite{thomas2018disentangling} or orthogonal techniques \cite{ahmed2012orthogonal} to guarantee this assumption to be satisfied on feature representation space. We leave this in future work.
As for assumption \ref{asmp:one_CF}, we think it is reasonable and acceptable in real applications. For example, if we want to predict the crime rate, we could know the income is one causal variable. Moreover, one can identify a causal variable as seed variable by estimating its causal effect \cite{athey2018approximate,kuang2017estimating}.

\section{Experiments}


\subsection{Baselines}
We implement the following variable selection methods as baselines, (i) correlation based methods, including minimal Redundancy Maximal Relevance (mRMR) \cite{peng2005feature}, Random Forest (RF) \cite{breiman2001random} and LASSO \cite{tibshirani1996regression}, they would be affected by the spurious correlation between non-causal variable and the response variable, and select non-causal variables for prediction; (ii) causation based methods, including PC-simple\footnote{Previous CI based methods either need observe all causal variables, or assume causal sufficiency, moreover, with curse of dimensionality. So, we only compare with PC-simple, a prominent CI based method.} \cite{buhlmann2010variable} and causal effect (CE) estimator \cite{athey2018approximate,kuang2017estimating}, they need to assume all causal variable are observed, moreover, PC-simple requires causal sufficiency and with curse of dimensionality; (iii) stable/invariant learning based methods, including invariant causal prediction (ICP) \cite{peters2016causal} and global balancing algorithm (GBA) \cite{kuang2018stable,kuang2020stable}, ICP need multiple training environments for reveal causation and GBA requires tremendous training data for global sample weighting.

In our algorithm, we employ causal effect estimator \cite{kuang2017estimating} to identify one causal variable without assumption \ref{asmp:one_CF}. Then, we execute CI test with bnlearn method \cite{scutari2009learning}, denoted as \emph{Our+BNCI}, and RCIT \cite{strobl2019approximate} method, denoted as \emph{Our+RCIT}.

We do not compare with a recent causal variable selection method \cite{mastakouri2019selecting}, since it requires the knowledge of a cause variable of each candidate causal variable, which is not applicable in our problem.

ICP method cannot be applied for variables ranking, but selecting a subset of variables for prediction, where the size of that subset variables is determined by its algorithm. Hence, the experimental results of ICP reported in this paper is based on its unique subset of selected variables.

Based on the selected variables from each algorithm, we apply a linear model\footnote{For simplification, we use linear model to evaluate the selected variables, other models can also be applied.} for prediction to check their stability across unknown test data.

\subsection{Evaluation Metrics}
To evaluate the performance of causal variable separation/selection, we use precision@k and ranking index of unstable non-causal variable as evaluation metrics.
Precision@k refers to the proportion of top-k selected variables that are hitting the true causal variables set as follows:
\begin{eqnarray}
Precision@k = \frac{|\{x_i|x_i\in \hat{\mathbf{C}}, index(x_i)<k, x_i\in \mathbf{C}\}|}{k},
\end{eqnarray}
where $\hat{\mathbf{C}}$ and $\mathbf{C}$ refer to the set of selected causal variables and true causal variables, respectively. $index(x_i)$ is the ranking index of variable $x_i$ in the selected variables $\hat{\mathbf{C}}$.

Similar to \cite{kuang2018stable}, we also adopt Average\_Error and Stability\_Error to measure the performance of stable prediction with the following definition:
\begin{align}
\label{metrics:acc_and_stb}
\Scale[0.85]{\mbox{Average\_Error}} = \Scale[0.85]{ \frac{1}{|\mathcal{E}|}\sum\limits_{e \in \mathcal{E}}\mbox{RMSE}(\mathbf{D}^e)},\quad
\Scale[0.85]{\mbox{Stability\_Error}}=
\Scale[0.85]{\sqrt{\frac{1}{|\mathcal{E}|-1}\sum\limits_{e \in \mathcal{E}}\left(\mbox{RMSE}(\mathbf{D}^e)-\mbox{Average\_Error}\right)^{2}}}.
\end{align}


\subsection{Experiments on Synthetic Data}


\subsubsection{Dataset}
To generate the synthetic datasets, we consider the sample size $n = 2000$ and dimension of observed variables $p = \{10,20,40,80\}$. We first generate the observed variables $\mathbf{X} = \{\mathbf{C},\mathbf{N}\}$. From Fig. \ref{fig:SCM_figure1} and assumption \ref{asmp:independent}, we know causal variables $\mathbf{C}$ and non-causal variables $\mathbf{N}$ should be independent, but the causal variables $\mathbf{C}$ could be dependent with each other, and the same to non-causal variables $\mathbf{N}$. Hence, we generate $\mathbf{X} = \{\mathbf{C}_{,1},\cdots,\mathbf{C}_{,p_c},\mathbf{N}_{,1},\cdots,\mathbf{N}_{,p_n}\}$ with the help of auxiliary variables $\mathbf{Z}_{\mathbf{C}}$ and $\mathbf{Z}_{\mathbf{N}}$ with independent Gaussian distributions as:
\begin{eqnarray}
\mathbf{Z_C}_{,1}, \cdots, \mathbf{Z_C}_{,p}\   \overset{iid}{\sim} \mathcal{N}(0,1);\ \ \ \ \ \ \mathbf{C}_{,i} = 0.8*\mathbf{Z_C}_{,i} + 0.2*\mathbf{Z_C}_{,i+1}, \,\,  i = 1, 2, \cdots, p_c\\
\mathbf{Z_N}_{,1}, \cdots, \mathbf{Z_N}_{,p}\   \overset{iid}{\sim} \mathcal{N}(0,1);\ \ \ \ \ \ \mathbf{N}_{,j} = 0.8*\mathbf{Z_N}_{,j} + 0.2*\mathbf{Z_N}_{,j+1}, \,\,  j = 1, 2, \cdots, p_n,
\end{eqnarray}
where the number of causal variables $p_c = 0.3*p$ and the number of non-causal variables $p_n = 0.7*p$. $\mathbf{C}_{,i}$ and $\mathbf{N}_{,j}$ represent the $i^{th}$ and $j^{th}$ variable in $\mathbf{C}$ and $\mathbf{N}$, respectively.

Then, we generate the response variable $Y$ as:
\begin{eqnarray}
\Scale[1.0]{Y = \sum_{i=1}^{ps} \alpha_i\cdot \mathbf{C}_{,i} + \sum_{j=1}^{p_c}\beta_j\cdot e^{\mathbf{C}_{\cdot,j}\mathbf{C}_{\cdot,j+1}\mathbf{C}_{\cdot,j+2}} + \varepsilon,}
\end{eqnarray}
where $\alpha_i = (-1)^{i}\cdot p_c/i$, $\beta_j = I(mod(j,3)\equiv1)$ and $\varepsilon =  \mathcal{N}(0,0.3)$.
The $I(\cdot)$ is the indicator function and function $mod(x, y)$
returns the modulus after division of $x$ by $y$.

From the generation of $Y$, we know that $Y$ is only affected by the causal variables $\mathbf{C}$, and independent with the non-causal variables $\mathbf{N}$. In real applications, however, some non-causal variables might be spuriously correlated with $Y$ since sample selection bias as shown in Fig. \ref{fig:SCM_figure1}, and their correlation might vary across datasets.
To check the stability of algorithms under that practical setting, we generate a set of environments, each with a stable probability $P(Y|\mathbf{C})$, but a distinct spuriously correlation $P(Y|\mathbf{N})$. For simplification, we only set one non-causal variable $\mathbf{N}_{,pn}$ as the \emph{unstable non-causal variable}, and change its spuriously correlation $P(Y|\mathbf{N}_{,pn})$ across environments.

Specifically, we vary $P(Y|\mathbf{N}_{,pn})$ via biased sample selection with a bias rate $r\in[-3,-1)\cup(1,3]$ based on $\mathbf{N}_{,pn}$ and $Y$ as shown in Fig. \ref{fig:SCM_figure1}.
For each sample, we select it with probability $Pr = |r|^{-5*D_i}$, where $D_i = |Y-sign(r)*\mathbf{N}_{,pn}|$.
If $r>0$, $sign(r) = 1$; otherwise, $sign(r) = -1$.

Note that $r>1$ corresponds to positive spurious correlation between $Y$ and $\mathbf{N}_{,pn}$, while $r<-1$ refers to the negative spurious correlation between $Y$ and $\mathbf{N}_{,pn}$.
The higher value of $|r|$, the stronger correlation between $\mathbf{N}_{,pn}$ and $Y$.  Different value of $r$ refers to different environments.
All methods are trained with $r_{train}=2.0$, but tested across environments with different $r_{test} \in[-3,-1)\cup(1,3]$.

\begin{minipage}{\textwidth}
        \begin{minipage}[t]{0.45\textwidth}
            \centering
            \makeatletter\def\@captype{table}\makeatother\caption{Results of precision@k, where $k$ equals the number of causal variable, namely $k = p*0.3$. ICP method cannot be applied for selecting variable with specific size.}
            \label{tab:precision@k}
\resizebox{!}{1.7cm}
{
\begin{tabular}{|c|c|c|c|c|}
\hline
Dimension  & p=10  &  p=20  &  p=40  &  p=80  \\
\hline
\hline
mRMR & 0.333 & 0.167 & 0.167 & 0.167\\ \hline
RF & 0.667 & 0.500 & 0.250 & 0.333\\ \hline
LASSO & 0.667 & 0.833 & 0.500 & 0.125\\ \hline
PC-simple & 0.667 & 0.667 & 0.250 & 0.167\\ \hline
CE & \textbf{1.000} & 0.833 & 0.917 & 0.792\\ \hline
ICP & - & - & - & -\\ \hline
GBA & \textbf{1.000} & 0.833 & 0.917 & 0.583\\ \hline
Our+BNCI & \textbf{1.000} & \textbf{1.000} & \textbf{1.000} & \textbf{0.833}\\ 
Our+RCIT & \textbf{1.000} & \textbf{1.000} & 0.917 & 0.750\\ \hline
\end{tabular}
}
        \end{minipage}
        \ \ \ \ \ \ \ \ \ \ \
        \begin{minipage}[t]{0.45\textwidth}
        \centering
        \makeatletter\def\@captype{table}\makeatother\caption{Ranking index of the unstable non-causal variable $\mathbf{N}_{,pn}$, where ``Y'' denotes that the unstable non-causal variable is in the selected subset in ICP method.}
\label{tab:unstable_noncausal_feature_ranking}
\resizebox{!}{1.7cm}
{
\begin{tabular}{|c|c|c|c|c|}
\hline
Dimension  & p=10  &  p=20  &  p=40  &  p=80  \\
\hline
\hline
mRMR & 1 & 1 & 1 & 1\\ \hline
RF & 1 & 1 & 1 & 1\\ \hline
LASSO & 3 & 1 & 1 & 1\\ \hline
PC-simple & 1 & 1 & 1 & 1\\ \hline
CE & 4 & 2 & 2 & 3\\ \hline
ICP & Y & Y & Y & Y\\ \hline
GBA & 4 & 2 & 3 & 1\\ \hline
Our+BNCI & \textbf{5} & \textbf{14} & 16 & \textbf{77}\\ 
Our+RCIT & 4 & 7 & \textbf{25} & 65\\ \hline

\end{tabular}
}
        \end{minipage}
    \end{minipage}

\subsubsection{Results}


\textbf{Results on Causal Variables Separation/Selection.} We report the results on causal variable selection from two aspects, including the ranking of causal variable with precision$@$k in Tab.\ref{tab:precision@k}  and ranking of unstable non-causal variable in Tab. \ref{tab:unstable_noncausal_feature_ranking}. The ranking of causal variables determines the average error of prediction across environments, the closer to 1 of precision$@$k, the better; while the ranking of unstable non-causal variable determines the stability error of prediction across environments, the lower ranking, the better.
From Tab. \ref{tab:precision@k} and \ref{tab:unstable_noncausal_feature_ranking}, we conclude that: (i) Traditional correlation based variables selection methods, including mRMR, Random Forest and LASSO cannot precisely select the causal variables (with lower precision$@$k) and rank the unstable non-causal variable with a higher ranking. The main reason is that the spurious correlation is more significant than causation under the sample selection bias. (ii) The performance of PC-simple is similar to correlation based method, since it's hard to search the optimal solution for PC-simple via naively random search, moreover, it relies on the causal sufficiency assumption and needs to observed all causal variables. (iii) The performance of causation based methods, including CE and GBA, is better than those correlation based methods with higher precision$@$k and lower ranking of unstable non-causal variable. Since by revealing part of causations among variables, they can reduce spurious correlations in training data. But their performances are still worse than our methods in high dimensional settings, since they need enough training data for a better sample rewighting, moreover, they need to observed all causal variables. (iv) Our methods achieve the best performance for the separation/selection of causal variables (with highest precision$@$k) and the ranking of unstable non-causal variable.

\begin{figure}[t]
\centering
\includegraphics[width=1.6in]{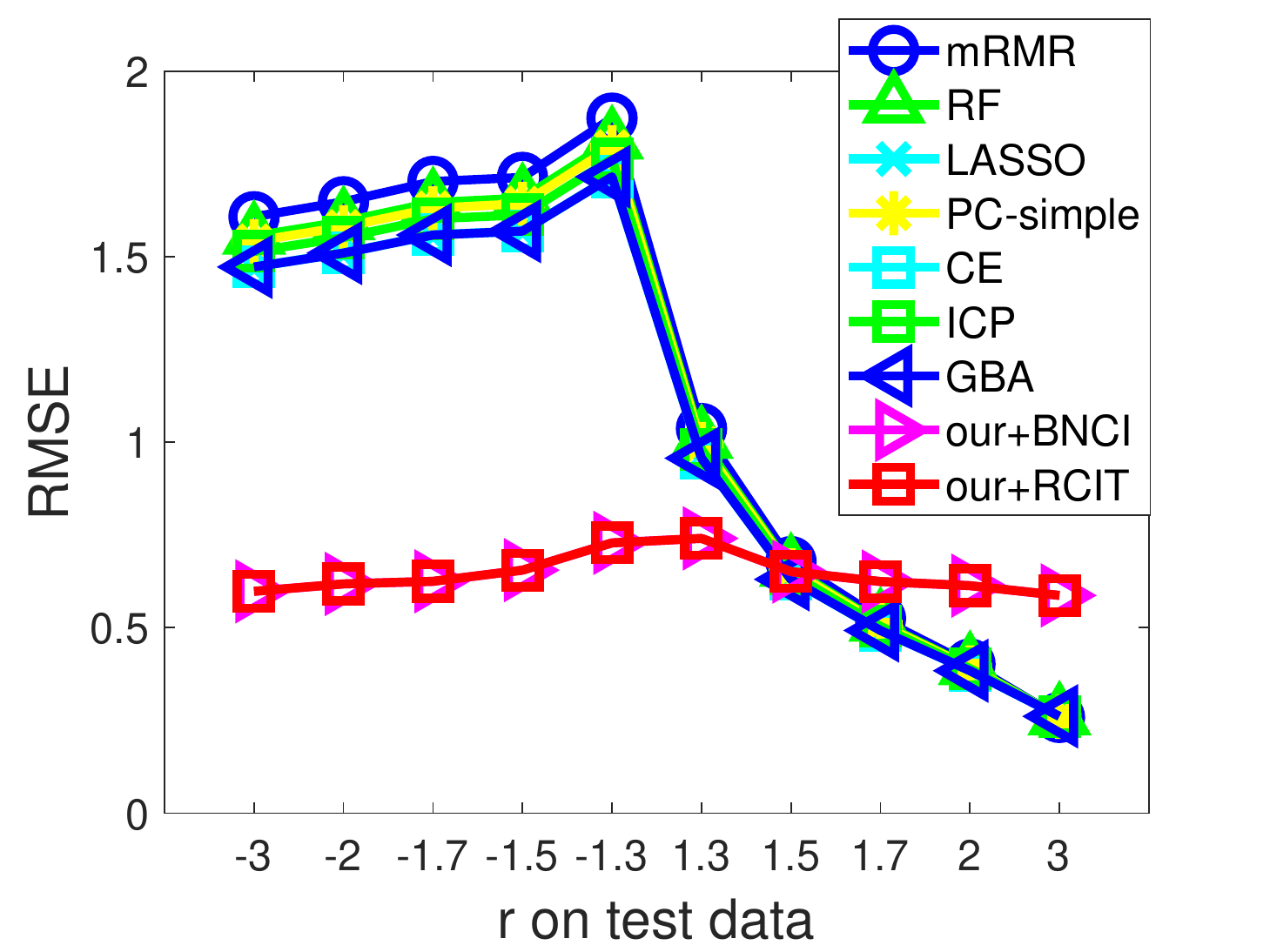}
\vspace{-0.1in}
\caption{Prediction results across unknown test data with $n=2000, p=20$. All methods are trained with $r_{train}=2.0$, but tested across environments with different $r_{test} \in[-3,-1)\cup(1,3]$.}
\label{fig:results_synthetic}
\end{figure}

\begin{table*}[]
\centering
\caption{Results of Average\_Error and Stability\_Error with different dimension $p$.}
\vspace{-0.1in}
\label{tab:AE_SE}
\resizebox{!}{1.45cm}
{
\begin{tabular}{|c|c|c|c|c|c|c|c|c|}
\hline
Dimension     & \multicolumn{2}{c|}{p=10}         & \multicolumn{2}{c|}{p=20}         & \multicolumn{2}{c|}{p=40}         & \multicolumn{2}{c|}{p=80}         \\ \hline
Metrics       & Average\_Error & Stability\_Error & Average\_Error & Stability\_Error & Average\_Error & Stability\_Error & Average\_Error & Stability\_Error \\ \hline
\hline
mRMR & 1.058 & 0.548 & 1.145 & 0.599 & 1.179 & 0.625 & 1.177 & 0.619\\ \hline
RF & 0.994 & 0.506 & 1.110 & 0.576 & 1.174 & 0.622 & 1.177 & 0.619\\ \hline
LASSO & 0.994 & 0.506 & 1.055 & 0.541 & 1.170 & 0.618 & 1.177 & 0.619\\ \hline
PC-simple & 1.039 & 0.536 & 1.100 & 0.570 & 1.175 & 0.622 & 1.178 & 0.619\\ \hline
CE & \textbf{0.413} & \textbf{0.019} & 1.055 & 0.541 & 1.132 & 0.593 & 1.168 & 0.613\\ \hline
ICP & 0.680 & 0.313 & 1.082 & 0.558 & 1.172 & 0.621 & 1.176 & 0.620\\ \hline
GBA & \textbf{0.413} & \textbf{0.019} & 1.055 & 0.541 & 1.132 & 0.594 & 1.167 & 0.612\\ \hline
Our+BNCI & \textbf{0.413} & \textbf{0.019} & \textbf{0.644} & \textbf{0.049} & \textbf{0.879} & \textbf{0.111} & \textbf{1.017} & \textbf{0.160}\\ 
Our+RCIT & \textbf{0.413} & \textbf{0.019} & \textbf{0.644} & \textbf{0.049} & 0.909 & 0.121 & 1.020 & 0.161\\ \hline

\hline
\end{tabular}
}
\vspace{-0.2in}
\end{table*}

\textbf{Results on Stable Prediction.}
With the variable ranking list form each algorithm, we select top-$k$ ranked variables to evaluate their performances on stable prediction across unknown test environments, where $k$ is set as the number of causal variables (\ie, $k=0.3*p$). Fig. \ref{fig:results_synthetic} and Tab. \ref{tab:AE_SE} demonstrate the experimental results on stable prediction.
From Fig. \ref{fig:results_synthetic}, we find that (i) the performance of our methods are worse than baselines when $r_{test}>1.5$. This is because the spurious correlation between unstable non-causal variable and the response variable are highly similar between training data ($r_{train} = 2.0$) and test data when $r_{test}>1.5$, and that correlation can be exploited for improving predictive performance; (ii) the performance of our methods are much better than baseline when $r_{test}<-1.3$, where that spurious correlation are totaly different between training ($r_{train} = 2.0$) and test data $r_{test}<-1.3$, leading to unstable prediction on baselines; (iii) our methods achieve the most stable prediction across all test data, since our algorithm can precisely separate the causal variables and achieve the lowest ranking of unstable non-causal variable as reported in Tab.\ref{tab:precision@k} and Tab. \ref{tab:unstable_noncausal_feature_ranking}.

To clearly demonstrate the advantages of our algorithm on stable prediction, we report the detail results under different synthetic settings in Tab. \ref{tab:AE_SE}. From the results, we can conclude that our algorithm can make stable prediction across unknown environments via causal variable separation.

\subsection{Experiments on Real-World Data}

\par \textbf{Dataset.}
To evaluate the performance of our algorithm in real-world datasets, we apply it to a Parkinson's telemonitoring dataset\footnote{\url{https://archive.ics.uci.edu/ml/datasets/parkinsons+telemonitoring}}, which was wildly used for the problem of domain generalization \cite{muandet2013domain,blanchard2017domain} and other regression tasks \cite{tsanas2009accurate}.
This dataset consists of biomedical voice measurements from 42 patients with early-stage Parkinson's disease recruited for a six-month trial of a telemonitoring device for remote symptom progression monitoring.
For each patient, there are about 200 recordings, which were automatically recorded in the patients' home.
The task is to predict the clinician's motor UPDRS scoring of Parkinson's disease symptoms from patients' features, including their age, gender, test time and many other measures.

\begin{figure*}[t]
\centering
\subfloat[RMSE on env. G1 \label{fig:real_RMSE_on_Env1}]{
  \includegraphics[width=1.3in]{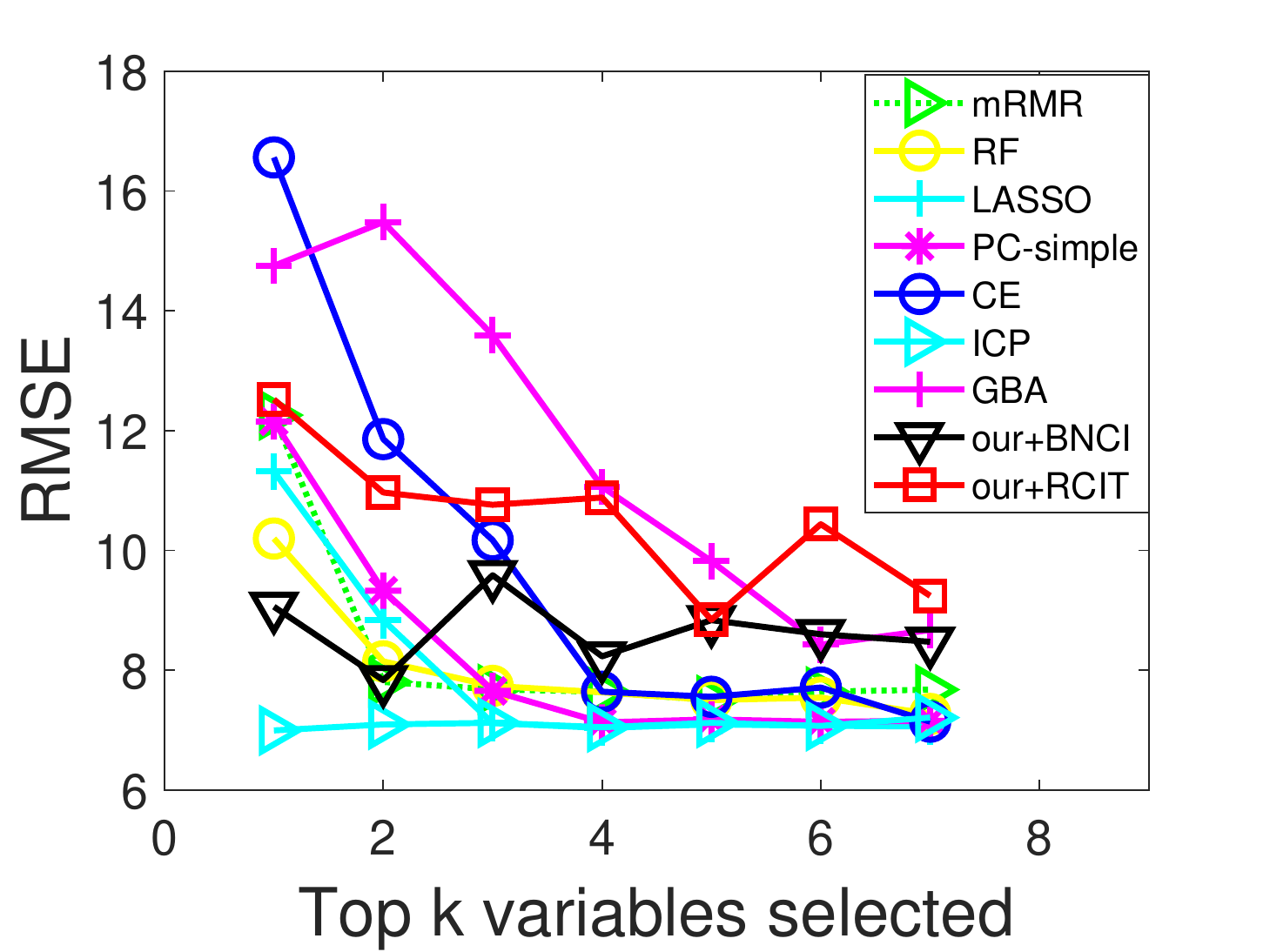}
}
\subfloat[RMSE on env. G2 \label{fig:real_RMSE_on_Env2}]{
  \includegraphics[width=1.3in]{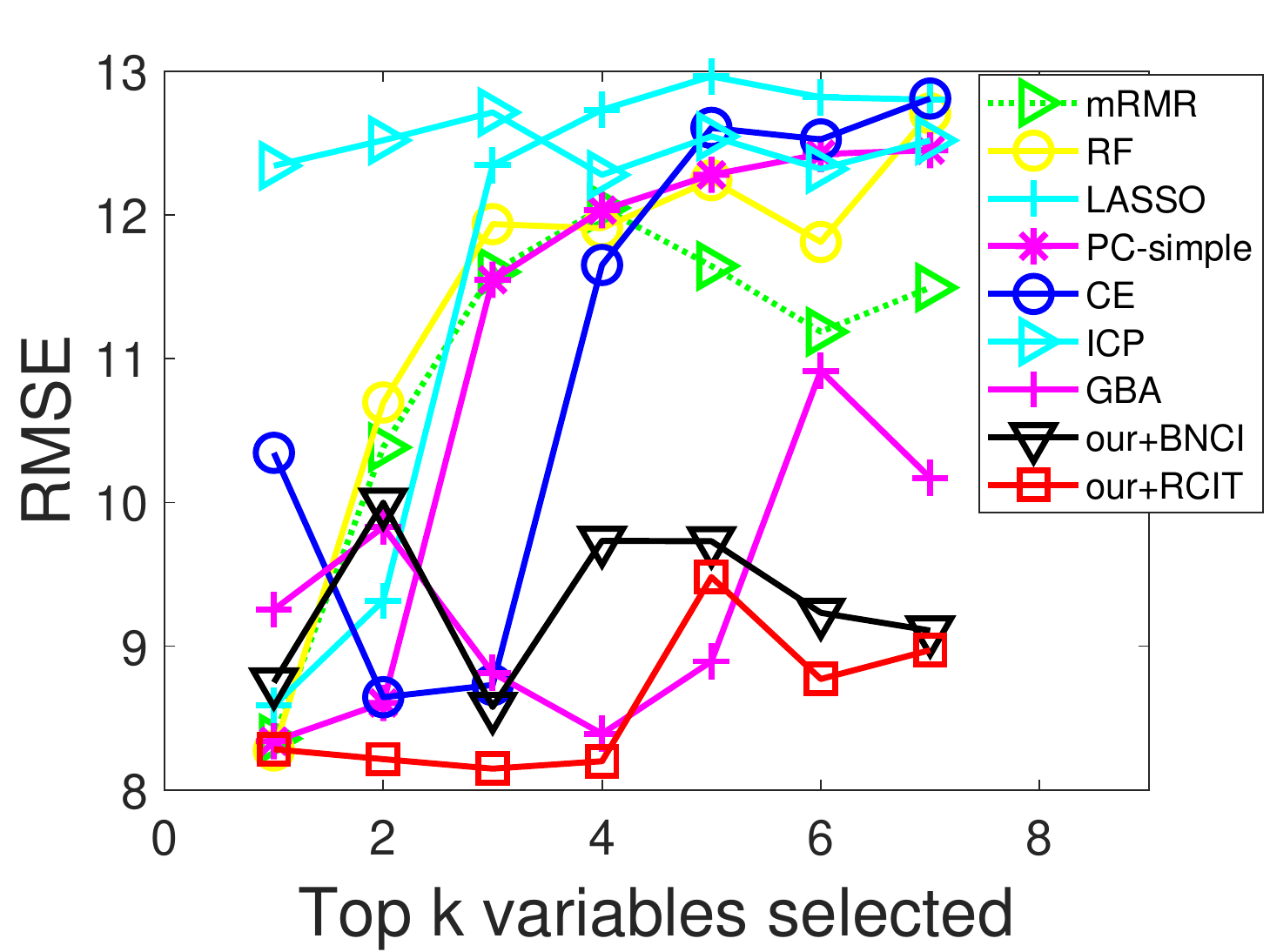}
}
\subfloat[RMSE on env. G3 \label{fig:real_RMSE_on_Env3}]{
  \includegraphics[width=1.3in]{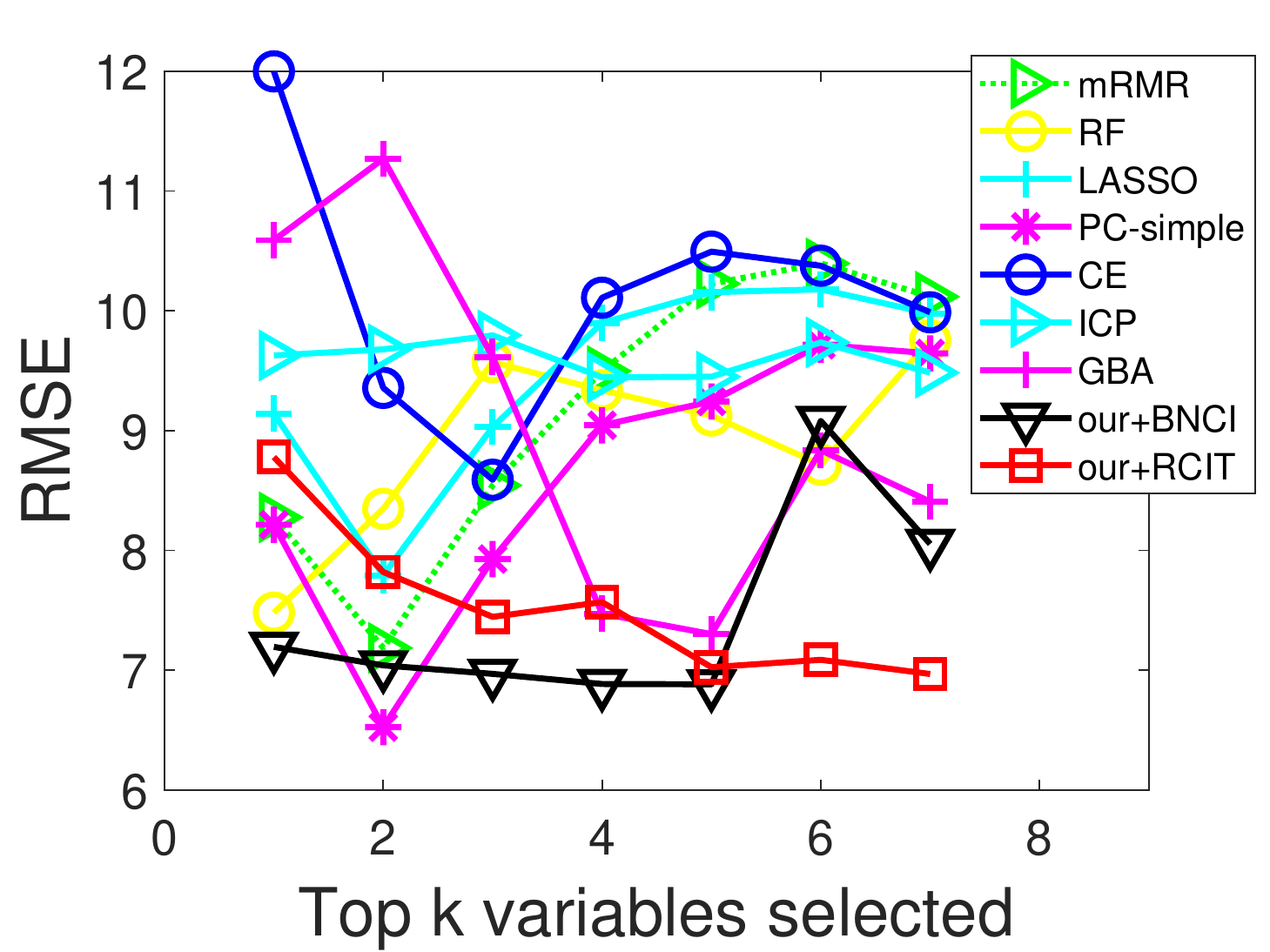}
}
\subfloat[RMSE on env. G4 \label{fig:real_RMSE_on_Env4}]{
  \includegraphics[width=1.3in]{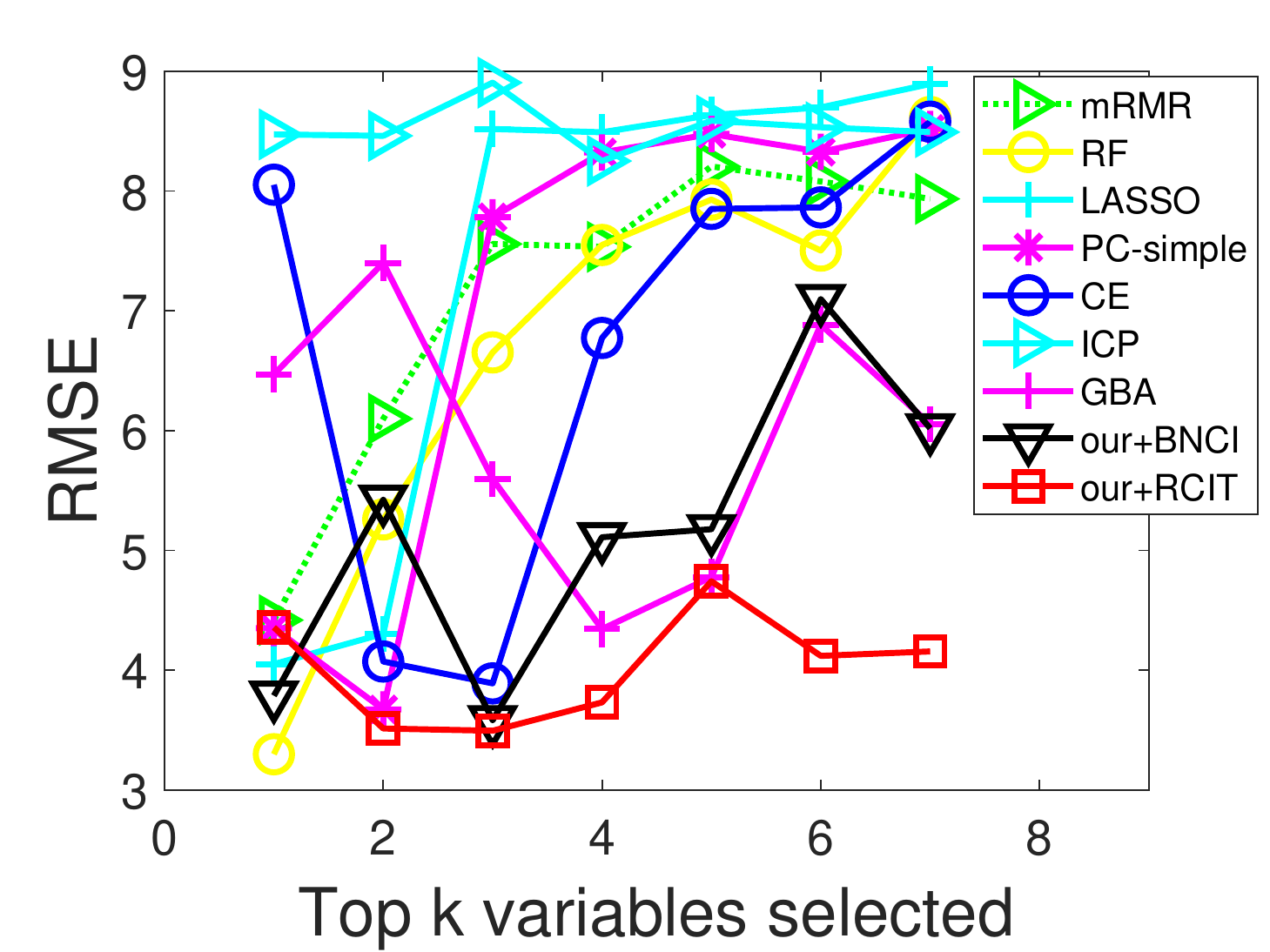}
}
\caption{Results of RMSE with top-$k$ selected variables on different environments. All algorithms are trained with data from environment G1, but tested on the data from each environment. When the test environment is different from the training one (e.g., G2, G3, and G4), our algorithm achieves better performance than baselines.}
\label{fig:results_real}
\end{figure*}

\par \textbf{Experimental Settings.}
In our experiments, we set the motor UPDRS scoring as the response variables $Y$.
To test the stability of all methods, we generate different environments by biased data separation based on different patients. Specifically, we separate the whole 42 patients into 4 patients' groups, including group 1 (G1) with recordings from 21 patients, and other three groups (G2, G3 and G4) are all with recordings from different 7 patients, where the different groups correspond to different environments. Considering a practical setting where a researcher has a single data set and wishes to train a model that can then be applied to other environments, in our experiments, we trained all models with data from environment G1, but tested them on all 4 groups.

\par \textbf{Experimental Results.}
We report the experimental results of RMSE with top-$k$ ranked variables in Figure \ref{fig:results_real}. Fig. \ref{fig:real_RMSE_on_Env1} shows that correlation based methods (LASSO, mRMR and RF) outperform causation based methods (GBA and our method), this is because the training and test have the similar distribution on env. G1, hence the spurious correlation between non-causal variables and response variable can bring positive power for prediction.
Moreover, we find ICP method achieves good performance in env. G1 since it cannot differentiate the spurious correlation from only one training environment.
Fig. \ref{fig:real_RMSE_on_Env2}, \ref{fig:real_RMSE_on_Env3} and \ref{fig:real_RMSE_on_Env4} demonstrate that causation based methods are better than correlation based methods when the test distributions are out of the training one, and our method, especially the method ``our+RCIT'', can almost achieve the best performance. The main reason is that spurious correlation on training could be different on testing, while causation based methods could discover causal variables for more stable prediction across environments, and our method performs the best on causal variables ranking and separation.
In addition, we observed that in non-i.i.d settings\footnote{The test distribution is different from the training one.}, the prediction performance might seriously decrease as inputting more selected variables, since some selected variables could be spuriously correlated with the response and unstable across environments.



\section{Conclusion}

In this paper, we focus on the problem of stable prediction via leveraging a seed variable for causal variable separation.
We argue that most of traditional prediction methods and variable selection methods are correlation based, resulting in instability problem on prediction across unknown environments.
By assuming that the casual variables and non-causal variables are independent, in this paper, we proposed a causal variable separation algorithm with a single CI test per variable, and provide a series of theorems to prove that our algorithm can precisely separate the causal variables. We also demonstrate that the precisely separated causal variables from our algorithm can bring stable prediction across unknown test data.
The experimental results on both synthetic and real-world datasets show that our algorithm outperforms the baselines for causal variables separation and stable prediction.

\bibliographystyle{plainnat}
\bibliography{paper}

\end{document}